\renewcommand\AB@affilsepx{,\quad \protect\Affilfont}
\newtheorem{theorem}{Theorem}
\newtheorem{lemma}[theorem]{Lemma}
\newenvironment{proof-sketch}{\noindent{\bf Sketch of Proof}\hspace*{1em}}{\qed\bigskip}
\newenvironment{proof-idea}{\noindent{\bf Proof Idea}\hspace*{1em}}{\qed\bigskip}
\newenvironment{proof-of-lemma}[1]{\noindent{\bf Proof of Lemma #1}\hspace*{1em}}{\qed\bigskip}
\newenvironment{proof-attempt}{\noindent{\bf Proof Attempt}\hspace*{1em}}{\qed\bigskip}
\newcommand{\Exp}{\mathbb{E}}
\newcommand{\Reals}{\mathbb{R}}
 \gdef\xxxmark{%
   \expandafter\ifx\csname @mpargs\endcsname\relax % in minipage?
     \expandafter\ifx\csname @captype\endcsname\relax % in figure/caption?
       \marginpar{\textcolor{red}{xxx~}}% not in a caption or minipage, can use marginpar
     \else
       \textcolor{red}{xxx~}% notice trailing space
     \fi
   \else
     \textcolor{red}{xxx~}% notice trailing space
   \fi}
 \gdef\xxx{\@ifnextchar[\xxx@lab\xxx@nolab}
 \long\gdef\xxx@lab[#1]#2{{\bf [\xxxmark \textcolor{red}{#2} ---{\sc #1}]}}
 \long\gdef\xxx@nolab#1{{\bf [\xxxmark \textcolor{red}{#1}]}}
\DeclareMathOperator*{\argmax}{arg\,max}
\newcommand{\leg}{leg}
\newcommand{\adv}{adv}
\newcommand{\dir}[1]{\ensuremath{{\vec{d}}_{\text{#1}}}}
\newcommand{\mindist}[1]{\ensuremath{\text{MIN-DIST}_{#1}}}
\newcommand{\interdist}[1]{\ensuremath{\text{INTER-DIST}_{{#1}}}}
\newcommand{\inpspace}{D}
\newcommand{\Dist}{\mu}
\newcommand{\Dm}{\ensuremath{\Dist_{-1}}}
\newcommand{\Dp}{\ensuremath{\Dist_{+1}}}
\newcommand{\meanm}{\ensuremath{\Exp_{\Dm}[\vec{x}]}}
\newcommand{\meanp}{\ensuremath{\Exp_{\Dp}[\vec{x}]}}
\newcommand{\meanfm}{\ensuremath{\Exp_{\Dm}{[\phi(\vec{x})]}}}
\newcommand{\meanfp}{\ensuremath{\Exp_{\Dp}{[\phi(\vec{x})]}}}
\renewcommand{\vec}[1]{{\bm {#1}}}
\title{The Space of Transferable Adversarial Examples}
\author[1]{Florian Tramèr}
\author[2]{Nicolas Papernot}
\author[3]{Ian Goodfellow}
\author[1]{Dan Boneh} 
\author[2]{Patrick McDaniel}
\affil[1]{Stanford University}
\affil[2]{Pennsylvania State University}
\affil[3]{Google Brain}
\date{}
\begin{document}

\maketitle

\begin{abstract}
Adversarial examples are maliciously perturbed inputs 
designed to mislead machine learning
(ML) models at test-time.
They often \emph{transfer}: 
the same adversarial example fools more than one model.

In this work, we propose novel methods for estimating the 
previously unknown \emph{dimensionality} of the space 
of adversarial inputs. 
We find that adversarial examples 
span a contiguous subspace of large (\textasciitilde 25) dimensionality.
Adversarial subspaces with higher dimensionality are more likely
to intersect.
We find that for two different models, a
significant fraction of their subspaces is shared, 
thus enabling transferability.

In the first quantitative analysis of the similarity of 
different models' decision boundaries, we show that these boundaries 
are actually close in \emph{arbitrary} directions, whether adversarial 
or benign.
We conclude by formally studying the \emph{limits}
of transferability. We derive (1) sufficient conditions
on the \emph{data distribution} that imply transferability for simple model
classes and (2) examples of scenarios in which transfer does not occur.
These findings indicate that it may be possible to design defenses
against transfer-based attacks, even for models that are vulnerable
to direct attacks.

\end{abstract}
\section{Introduction}
\label{sec:introduction}

Through slight perturbations of a machine learning (ML) model's inputs at test time, 
it is possible to generate \emph{adversarial examples} that cause the model to
misclassify at a high rate~\cite{biggio2013evasion,szegedy2013intriguing}.
Adversarial examples can be used to craft human-recognizable images
that are misclassified by computer vision
models~\cite{szegedy2013intriguing, 
	goodfellow2014explaining,
	moosavi2016deepfool,
	kurakin2016adversarial, 
	liu2016delving},
software containing malware but
classified as benign~\cite{laskov2014practical,xu2016automatically,grosse2016adversarial,hu2017generating}, and game
environments that force reinforcement learning agents to
misbehave~\cite{huang2017adversarial, behzadan2017vulnerability, lin2017tactics}.

Adversarial examples often \emph{transfer} across models 
~\cite{szegedy2013intriguing,
	goodfellow2014explaining,papernot2016transferability}: inputs 
generated to evade a specific model also mislead other models trained 
for the same task.
Transferability is an obstacle to
 secure deployment of ML models as it enables 
simple \emph{black-box attacks} against ML systems. 
An adversary can train a local
model---possibly by issuing prediction queries to the targeted 
model~\cite{papernot2016practical}---and use it to craft 
adversarial examples that transfer to the target model~\cite{szegedy2013intriguing}.
To defend against such attacks, it is necessary to have
a better understanding of why adversarial examples transfer.

\vspace*{-0.12in}

\paragraph{Adversarial Subspaces. }
Empirical evidence has shown that, rather than being scattered randomly
in small pockets, adversarial examples occur in large, contiguous
regions~\cite{goodfellow2014explaining,WardeFarley16}.
The dimensionality of these subspaces is relevant to the transferability
problem: the higher the dimensionality, the more likely it is
that the subspaces of two models will intersect significantly. 

\vspace*{-0.05in}

In this work, we thus directly estimate the dimensionality of these 
subspaces. 
We introduce methods for finding 
multiple \emph{orthogonal} adversarial directions and show 
that these perturbations span a multi-dimensional contiguous space of 
misclassified points.
We measure transferability of these subspaces 
on datasets for which diverse model classes 
attain high accuracy: 
digit classification (MNIST)~\cite{lecun1998gradient}
and malware detection (DREBIN)~\cite{arp2014drebin}.\footnote{
	We use a balanced subset of the DREBIN dataset, 
	pre-processed using feature selection to have
	$1{,}000$ features (in lieu of about $500,000$). 
	The models we train get over $94\%$ accuracy on this dataset.}
For example, we find that adversarial examples that transfer between two fully-connected 
networks trained on MNIST form a 25-dimensional space. 

In addition to sharing many dimensions of the adversarial subspace, we
find empirically that the boundaries of this space lie at similar distances
from legitimate data points in adversarial directions (e.g., indicated by an
adversarial example).
More surprisingly, models from different hypothesis 
classes learn boundaries that are close in \emph{arbitrary} 
directions, whether adversarial or benign (e.g., the direction defined
by two legitimate points in different classes).
We find that when moving into any direction away 
from data points, the distance to the  
model's decision boundary is on average larger than the distance 
\emph{separating} the boundaries of two models in that 
direction. Thus, adversarial perturbations 
that send data points sufficiently over a model's decision boundary 
likely transfer to other models.

\paragraph{The Limits of Transferability. } Given the empirical 
prevalence of transferability, it is natural to ask whether this 
phenomenon can be explained by simple properties of datasets, model classes, or
training algorithms.
In Section~\ref{sec:limits}, we consider the following informal hypothesis:
\vspace{-0.1em}
\emph{
\begin{quoting}
If two models achieve low error for some task while also exhibiting 
low robustness to adversarial examples, adversarial examples crafted 
on one model transfer to the other.
\end{quoting} 
}
\vspace{-0.1em}

This hypothesis is pessimistic: it implies that a model cannot be 
secure against adversarial examples transferred from other models 
(i.e, black-box attacks) unless it is also robust to adversarial 
examples crafted with knowledge of the model's parameters 
(i.e., white-box attacks). While this hypothesis holds in certain 
contexts, we show that it is \emph{not} true in the general case. 

We derive sufficient conditions on the data distribution 
that imply a form of the above hypothesis for a set of simple model classes. Namely, we 
prove transferability of \emph{model-agnostic} perturbations, 
obtained by shifting data points in
the direction given by the difference in class means (in input space).
These adversarial examples are by design effective against 
linear models. Yet, we 
show, both formally and empirically, that they can transfer 
to higher-order (e.g., quadratic) models.

However, we exhibit a counter-example to the above hypothesis by 
building a variant of the MNIST dataset for which adversarial 
examples fail 
to transfer between linear and quadratic models. 
Our experiment
suggests that transferability is not an inherent 
property of non-robust ML models. 

\paragraph{Our Contributions. }
To summarize, we make the following contributions:
\begin{itemize}[topsep=0pt, itemsep=0pt,leftmargin=0.25in]
	\itemsep0em

\item We introduce methods for finding multiple independent attack 
directions, enabling us to directly measure the dimensionality of the
adversarial subspace for the first time.
\item We perform the first quantitative study of the similarity of models' decision 
boundaries and show that models from different hypothesis classes 
learn decision boundaries that lie very close to one-another in both 
adversarial and benign directions.
\item In a formal study, we identify sufficient 
conditions for  transferability of model-agnostic perturbations, as well 
as tasks where  adversarial example transferability fails to hold.
\end{itemize}
\vspace*{-0.05in}

\section{Adversarial Example Generation}
\label{sec:background}

This work considers attacks mounted with adversarial
examples~\cite{szegedy2013intriguing,biggio2013evasion} to mislead 
ML models at test time.
We focus on techniques 
that aim to fool a model into producing \emph{untargeted} 
misclassifications (i.e., the model predicts any 
class other than the ground truth). We first introduce some 
notation.

\vspace*{-0.05in}
\def\arraystretch{1.2}
\begin{tabularx}{\textwidth}{r  X}
$\vec{x}, y$ & A clean input from some domain $D$ and its corresponding label. \\
$\vec{x}^*$ & An adversarial input in $D$. To enforce $\vec{x}^* \in D$, 
we either clip all pixels to the range $[0,1]$ for MNIST, or we round 
all feature values to a binary value for DREBIN.\\
$\vec{r}, \epsilon$ & The perturbation vector added to an input to 
create an adversarial example: $\vec{x}^* = \vec{x} + \vec{r}$,  
and its magnitude i.e., $\epsilon = \norm{\vec{r}}$ for some 
appropriate norm (e.g., $\ell_\infty$ or $\ell_2$). \\
$J(\vec{x}, y)$ & The loss function used to train model $f$ (e.g., 
cross-entropy).
\end{tabularx}
\def\arraystretch{1.0}

\paragraph{Fast Gradient [Sign] Method (FG[S]M).} 
For most of the experiments in this paper, we use
the method proposed in~\cite{goodfellow2014explaining}. 
The FGSM is an efficient technique for 
generating adversarial examples with a fixed $\ell_\infty$-norm: 
$
\vec{x}^* = \vec{x} + \varepsilon \cdot \mathtt{sign}\left(\nabla_{\vec{x}} J(\vec{x}, y)\right)
$.
We will consider variants of this approach that
constrain the perturbation using
other $\ell_p$ norms:
\begin{equation}
\label{eq:fgm}
\vec{x}^* = \vec{x} + \varepsilon \cdot \nabla_{\vec{x}} J(\vec{x}, y) / \|\nabla_{\vec{x}} J(\vec{x}, y)\|
\end{equation}
For general $\ell_p$ norms, we drop the ``sign'' in the acronym and 
simply call it the \emph{fast gradient method}.
We use the implementations provided by 
the \texttt{cleverhans~1.0} library~\cite{papernot2016cleverhans}.
Our most successful techniques for finding multiple adversarial 
directions, described in Section~\ref{sec:exploration}, are based on 
the FGM. We discuss other attacks we considered in 
Appendix~\ref{apx:multi-adv}.

\section{Exploring the Space of Transferable Adversarial Examples}
\label{sec:exploration}

Our aim is to evaluate the dimensionality of the adversarial
subspace that transfers between models. 
The FGM discussed above computes
adversarial examples by taking one step in the \emph{unique} 
optimal direction under a first-order approximation of the model's 
loss.
However, we know that adversarial examples form a dense space that is 
(1) at least two-dimensional~\cite{WardeFarley16}, and (2) occupies a 
negligible fraction of the entire input 
space~\cite{szegedy2013intriguing}.
This raises the question of how ``large'' this space is. 
We considered several techniques to find multiple 
\emph{orthogonal} adversarial
directions for an input point. 
Here, we only describe the most successful one. Others are described 
in Appendix~\ref{apx:multi-adv}.

\subsection{Gradient Aligned Adversarial Subspace (GAAS).}

This technique directly estimates the dimensionality of the 
adversarial subspace under a first-order approximation of the loss 
function.
For an input $\vec{x}$, we search the space of perturbations $\vec{r}$ (with
$\norm{\vec{r}}_2 \leq \epsilon$) that result in a significant increase in 
loss, i.e.,
$J(\vec{x}+\vec{r}, y) \geq J(\vec{x}, y) + \gamma$, for a $\gamma > 0$.
We recast this as the problem
of finding a maximal set of orthogonal perturbations 
$\vec{r}_1, \vec{r}_2, \dots, \vec{r}_k$, 
satisfying $\norm{\vec{r}_i}_2 \leq \epsilon$ and 
$\vec{r}_i^\top \nabla_{\vec{x}} J(\vec{x}, y) \geq \gamma$.
We will need the following simple result:

\begin{lemma}
	\label{lemma:orthogonal}
	Given $\vec{g} \in \Reals^d$ and $\alpha \in [0, 1]$. The maximum 
	number $k$ of orthogonal vectors $\vec{r}_1, \vec{r}_2, \dots \vec{r}_k \in \Reals^d$ 
	satisfying $\|\vec{r}_i\|_2 \leq 1$ and $\vec{g}^\top \vec{r}_i \geq \alpha \cdot \|\vec{g}\|_2$ 
	is  $k = \min\left\{\left\lfloor \frac{1}{\alpha^2}\right\rfloor, d\right\}$.
\end{lemma}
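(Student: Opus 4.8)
The plan is to establish the two matching inequalities $k \le \min\{\lfloor 1/\alpha^2\rfloor, d\}$ and $k \ge \min\{\lfloor 1/\alpha^2\rfloor, d\}$ separately: an upper bound valid for \emph{any} feasible family, and a matching explicit construction. As a preliminary normalization I would note that both constraints are invariant under positive scaling of $\vec{g}$ (replacing $\vec{g}$ by $c\vec{g}$ scales both $\vec{g}^\top\vec{r}_i$ and $\alpha\|\vec{g}\|_2$ by $c$), so without loss of generality $\|\vec{g}\|_2 = 1$, and after an orthogonal change of coordinates $\vec{g} = \vec{e}_1$; the per-vector constraint becomes $\vec{e}_1^\top\vec{r}_i \ge \alpha$ with $\|\vec{r}_i\|_2 \le 1$. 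I would dispose of the degenerate case $\alpha = 0$ up front (there $\lfloor 1/\alpha^2\rfloor = \infty$ and any orthonormal basis witnesses $k = d$), so that afterwards $\alpha > 0$ and every feasible $\vec{r}_i$ is nonzero.

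For the upper bound, given orthogonal feasible $\vec{r}_1,\dots,\vec{r}_k$ I would set $\vec{u}_i = \vec{r}_i/\|\vec{r}_i\|_2$, an orthonormal system, and apply Bessel's inequality: $\sum_{i=1}^k (\vec{g}^\top\vec{u}_i)^2 \le \|\vec{g}\|_2^2 = 1$. Since $\|\vec{r}_i\|_2 \le 1$ and $\vec{g}^\top\vec{r}_i \ge \alpha$, each term obeys $\vec{g}^\top\vec{u}_i = (\vec{g}^\top\vec{r}_i)/\|\vec{r}_i\|_2 \ge \alpha$, so $(\vec{g}^\top\vec{u}_i)^2 \ge \alpha^2$. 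Summing gives $k\alpha^2 \le 1$, hence $k \le 1/\alpha^2$ and, as $k$ is an integer, $k \le \lfloor 1/\alpha^2\rfloor$. Combined with the trivial $k \le d$ coming from orthogonality in $\Reals^d$, this yields $k \le \min\{\lfloor 1/\alpha^2\rfloor, d\}$.

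For the lower bound, which is the substantive part, set $k = \min\{\lfloor 1/\alpha^2\rfloor, d\}$ and seek vectors of the form $\vec{r}_i = \alpha\,\vec{e}_1 + \vec{w}_i$ with each $\vec{w}_i$ in the orthogonal complement of $\vec{e}_1$; this guarantees $\vec{g}^\top\vec{r}_i = \alpha$ automatically. Orthogonality $\vec{r}_i^\top\vec{r}_j = 0$ then forces $\vec{w}_i^\top\vec{w}_j = -\alpha^2$ for $i \ne j$, while the norm budget requires $\|\vec{r}_i\|_2^2 = \alpha^2 + \|\vec{w}_i\|_2^2 \le 1$. The key observation is that this Gram structure is realized by a scaled regular simplex: take the $k$ vertices of a regular simplex centered at the origin, scaled to common squared norm $(k-1)\alpha^2$, so the pairwise inner products equal $-\alpha^2$. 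Then $\|\vec{r}_i\|_2^2 = \alpha^2 + (k-1)\alpha^2 = k\alpha^2 \le 1$ exactly because $k \le \lfloor 1/\alpha^2\rfloor$, so the norm constraint is met with room to spare.

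The step I expect to require the most care — the main obstacle — is verifying that these simplex vectors actually exist inside the $(d-1)$-dimensional complement with precisely the required inner products. I would do this by exhibiting the candidate Gram matrix $G$ with $G_{ii} = (k-1)\alpha^2$ and $G_{ij} = -\alpha^2$, writing it as $G = \alpha^2\,(k\,I - J)$ with $J$ the all-ones matrix, and checking that $J$ has eigenvalues $k$ (once) and $0$ (with multiplicity $k-1$), so $G$ is positive semidefinite of rank exactly $k-1$. A rank-$(k-1)$ PSD matrix is the Gram matrix of $k$ vectors living in a $(k-1)$-dimensional space, and since $k \le d$ we have $k-1 \le d-1$, so such $\vec{w}_i$ fit inside $\mathrm{span}(\vec{e}_2,\dots,\vec{e}_d)$. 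This produces $k$ mutually orthogonal vectors satisfying both constraints, completing the construction and hence the matching lower bound.
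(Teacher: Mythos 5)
Your proof is correct, and it reaches the lemma by a route that differs from the paper's in its key technical step. The upper bound is the same argument as the paper's: normalize the $\vec{r}_i$ to unit vectors and apply Bessel's inequality to get $k\alpha^2 \leq 1$, with $k \leq d$ trivial. For the lower bound, the two constructions secretly produce the same family of vectors up to a scalar, but they are derived and verified quite differently. The paper chooses a rotation $Q$ mapping $\vec{g}$ to $\norm{\vec{g}}_2 \cdot k^{-1/2}(1,\dots,1,0,\dots,0)$ and takes $\vec{r}_i = Q^\top \vec{e}_i$: these are orthonormal (norm constraint tight) and each satisfies $\vec{g}^\top\vec{r}_i = k^{-1/2}\norm{\vec{g}}_2 \geq \alpha\norm{\vec{g}}_2$ (slack in the alignment constraint). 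You instead make the alignment constraint tight, writing $\vec{r}_i = \alpha\vec{e}_1 + \vec{w}_i$ with the $\vec{w}_i$ a scaled regular simplex in the complement of $\vec{e}_1$, and you certify existence of the simplex by checking that $G = \alpha^2(kI - J)$ is positive semidefinite of rank $k-1$; your norm constraint then has the slack, $\norm{\vec{r}_i}_2 = \sqrt{k}\,\alpha \leq 1$. (Indeed, scaling the paper's vectors by $\alpha\sqrt{k}$ gives exactly yours.) What your route buys: setting $k = \min\{\lfloor 1/\alpha^2\rfloor, d\}$ from the start, the Gram-matrix argument covers the regime $d < \lfloor 1/\alpha^2\rfloor$ uniformly, whereas the paper's construction is written only for $k = \lfloor 1/\alpha^2\rfloor \leq d$ and leaves the other case to the reader; you also treat $\alpha = 0$ explicitly (though there you should say the \emph{standard} basis works in your normalized coordinates --- an arbitrary orthonormal basis may require sign flips to meet $\vec{g}^\top\vec{r}_i \geq 0$). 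What the paper's route buys: it is fully explicit and elementary, needing only the existence of rotations carrying one unit vector to another, rather than the fact that a rank-$r$ PSD matrix is the Gram matrix of vectors in an $r$-dimensional space.
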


The proof is in Appendix~\ref{apx:proof-ortho}. Applying
Lemma~\ref{lemma:orthogonal} with $\alpha = \gamma^{-1}\epsilon
	\norm{\nabla_{\vec{x}} J(\vec{x}, y)}_2$,
and scaling the obtained unit
vectors by $\epsilon$, yields a set of orthogonal perturbations that
satisfy $\norm{\vec{r}_i}_2 \leq \epsilon$ and $\vec{r}_i^\top \nabla_{\vec{x}} J(\vec{x}, y) \geq
\gamma$, as desired. 
Informally, the number of orthogonal
adversarial directions is proportional to the increase in loss $\gamma$ 
(a proxy for the distance from $\vec{x}$ to the decision boundary) and 
inversely proportional to the \emph{smoothness} of the loss function 
and the perturbation magnitude $\epsilon$.
If the value of
$\gamma$ that results in a misclassification is unknown, 
we try the attack with multiple values of $\gamma$, and retain the 
value that results in a set with the most misclassified perturbations. 
An illustration of the GAAS method is given in Figure~\ref{fig:gaas}.
Finding an analog maximal construction of orthogonal perturbations
for other norms (e.g., $\ell_\infty$) is a nice open problem.

\subsection{Experiments}
\label{ssec:additional-dirs-eval}

\begin{figure}
	\centering
	
	\begin{minipage}[t]{0.48\textwidth}
		\centering
		\includegraphics[width=0.9\textwidth]{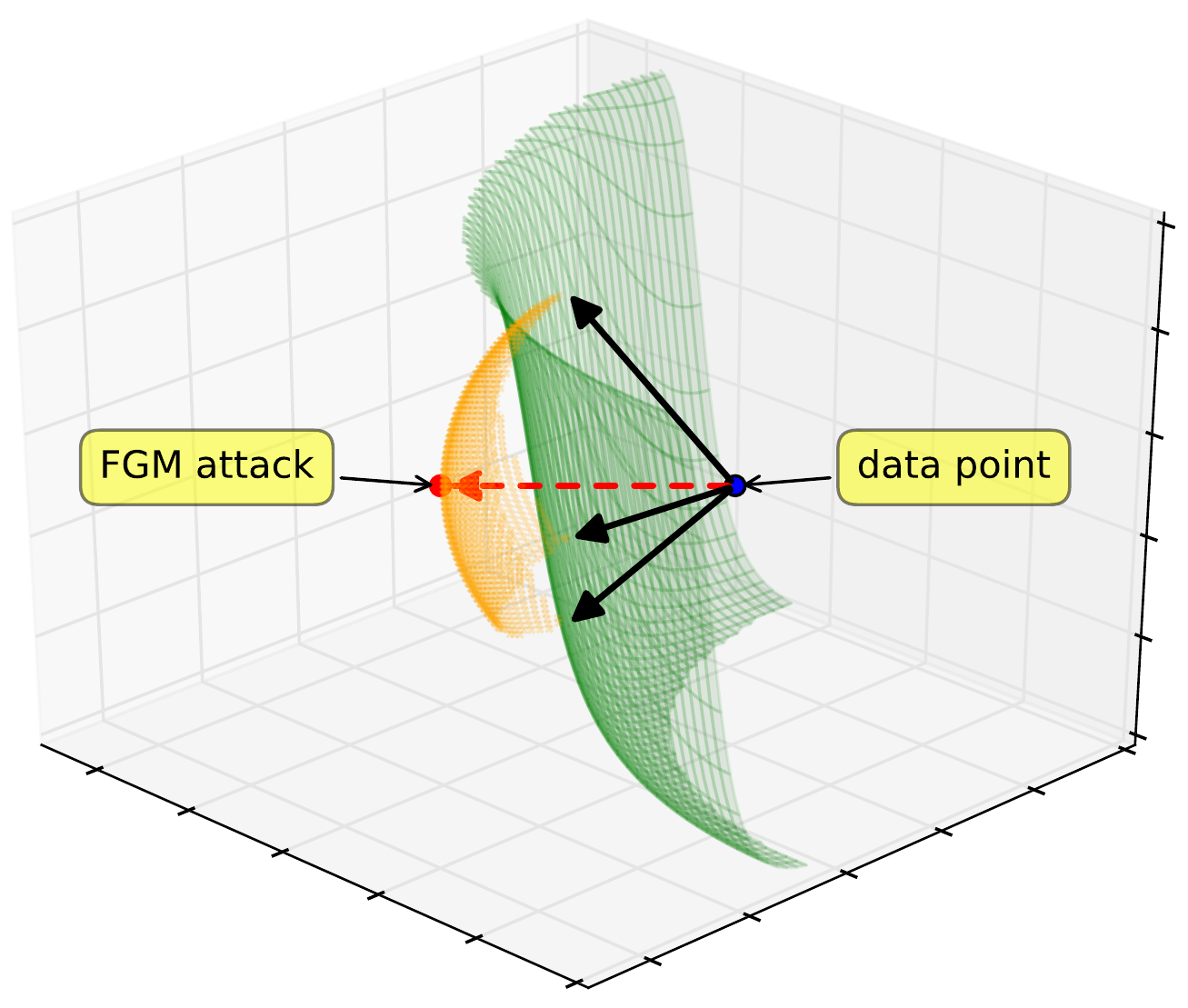}
		\caption{Illustration of the Gradient Aligned Adversarial Subspace (GAAS). 
			The gradient aligned attack (red arrow) crosses the decision boundary. 
			The black arrows are orthogonal vectors aligned with the gradient 
			that span a subspace of potential adversarial inputs (orange).}
		\label{fig:gaas}
	\end{minipage}
	~
	\hfill
	~
	\begin{minipage}[t]{0.48\textwidth}
		\includegraphics[width=\textwidth]{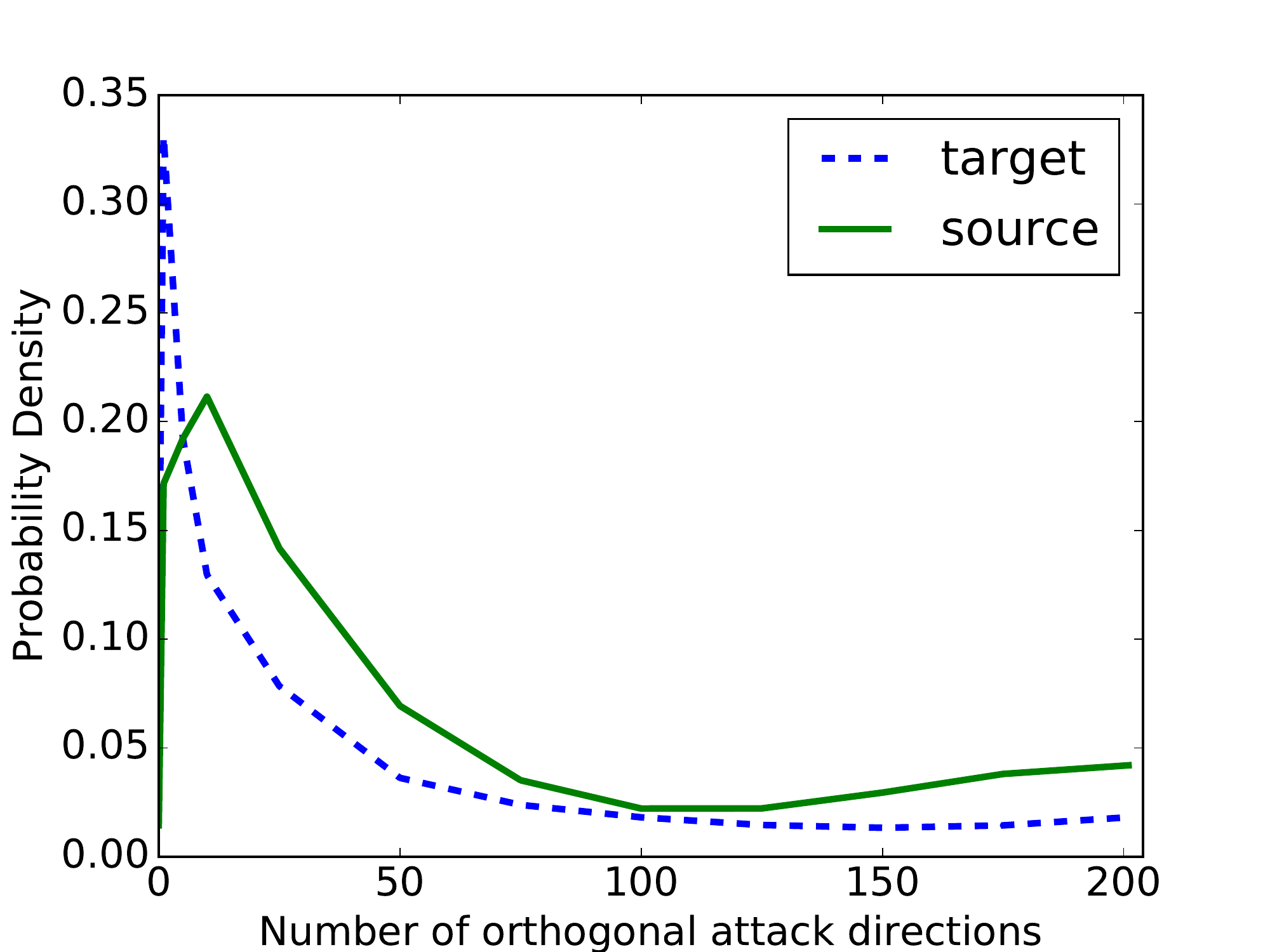}
		\caption{Probability density function of the number of 
			successful orthogonal adversarial perturbations 
			found by the GAAS method on the source DNN model, 
			and of the number of perturbations 
			that transfer to the target DNN model.} 
		\label{fig:cdf-directions}
	\end{minipage}
	
\end{figure}

The GAAS method was the most successful at
finding a large number of orthogonal attack directions 
(see Table~\ref{tab:multi-adv} in Appendix~\ref{apx:multi-adv} for 
results with other techniques). This gives further evidence that 
neural networks generalize in an overly linear fashion to 
out-of-sample data, as first argued 
by~\cite{goodfellow2014explaining}.

We first use two fully connected networks trained on MNIST
(results on DREBIN are in Appendix~\ref{apx:multi-adv}).
The source model f$_{\text{src}}$ (a two layer variant of 
architecture C from Table~\ref{table:mnist_archis}) for crafting 
adversarial examples is shallower than the target model 
f$_{\text{target}}$ (architecture C). 
We compute perturbations $\vec{r}$ with norm $\norm{\vec{r}}_2 \leq 5$.
Figure~\ref{fig:cdf-directions} plots the probability 
density function of the dimensionality of the adversarial 
subspace for each input.
On average, we find $44.28$ orthogonal perturbations (and over 
$200$ for the most vulnerable inputs) and $24.87$ directions that 
transfer to model f$_{\text{target}}$.
These perturbations span a dense subspace of adversarial
inputs: by randomly sampling a perturbation in the spanned space 
(with an $\ell_2$-norm of $5.0$), we fool model f$_{\text{src}}$ in 
$99\%$ of cases, and model
f$_{\text{target}}$ in $89\%$ of cases.

We repeat the experiment for two CNNs (model B in
Table~\ref{table:mnist_archis} as source, and A as target). 
The transfer rate for the FGM is lower for these 
models ($68\%$). 
We find fewer orthogonal perturbations on the source model ($15.18$), 
and, expectedly, fewer that transfer ($2.24$). Yet, 
with high probability,
points in the spanned adversarial subspace mislead
the source ($80\%$) and target ($63\%$) models.

\section{Decision Boundary Similarity enables Transferability}
\label{sec:boundaries}

Evidenced by Section~\ref{sec:exploration}, the existence of 
large transferable adversarial subspaces suggests 
that the decision boundaries learned by the source and target 
models must be extremely close to one another.
In this section, we quantitatively characterize
this similarity in both adversarial and 
benign directions. 
Our measurements show that when moving away from 
data points in different directions, the distance between two 
models' decision boundaries is smaller than the distance 
separating the data points from either boundary.
We further find that adversarial training~\cite{szegedy2013intriguing,goodfellow2014explaining} 
does not significantly ``displace'' the learned decision boundary,
thus leaving defended models vulnerable to black-box attacks.

\subsection{Distance Definitions}

Evaluating the decision 
boundary of an ML model in high-dimensions is usually intractable.
We thus restrict ourselves to measuring similarity in three linear 
directions, which are representative of the
model behavior on and off the data manifold. 
These are illustrated 
in Figure~\ref{fig:distance-directions}. 
Each direction is defined by a unit-norm vector 
relative to a data point $\vec{x}$ and some other point $\vec{x}'$, as 
$\dir{}(f, \vec{x}) \coloneqq 
\frac{\vec{x}' - \vec{x}}{\norm{\vec{x}' -\vec{x}}_2}$.

Distance is measured using the 
$\ell_2$ norm (we present similar results with the $\ell_1$ norm
in Appendix~\ref{ap:dir-dis}).
The point $\vec{x}'$ is defined differently for each direction, as follows:
\begin{enumerate}[itemsep=3pt,topsep=0pt]
	\item \textbf{Legitimate direction---}Written $\dir{\leg}(f, \vec{x})$, 
	it is defined by $\vec{x}$ and the closest data point $\vec{x}'$ in the test set
	with a different class label than $\vec{x}$.
	\item \textbf{Adversarial direction---}Written
	$\dir{\adv}(f, \vec{x})$, it is defined by $\vec{x}$ and an adversarial example 
	$\vec{x}' \coloneqq adv(f, \vec{x})$ generated
	from $\vec{x}$ to be misclassified by $f$.
	\item \textbf{Random direction---}Written
	$\dir{rand}(f, \vec{x})$, it is defined by $\vec{x}$ and an input $\vec{x}'$ drawn 
	uniformly over the input domain, conditioned on $\vec{x}'$ being 
	classified by $f$ in a different class than $\vec{x}$.
\end{enumerate}

\begin{figure}[t]
	\centering
	\includegraphics[width=0.95\textwidth]{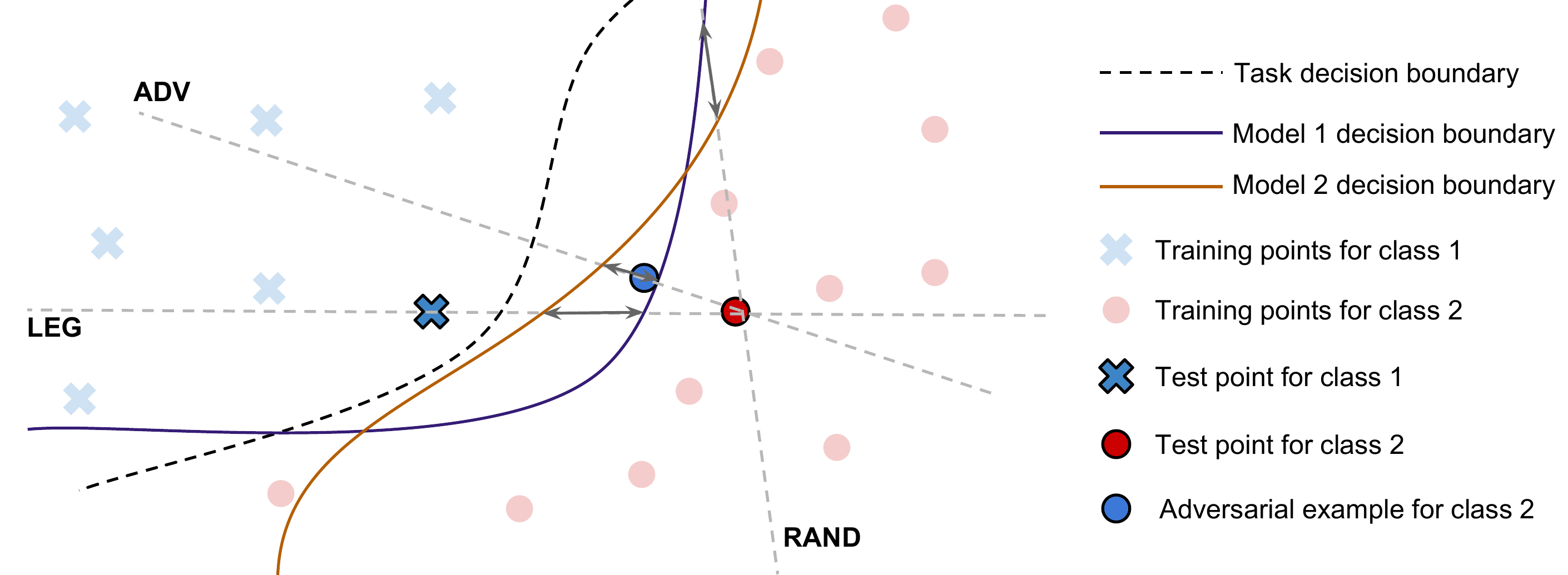}
	\caption{The three directions (Legitimate, Adversarial and Random) used throughout
		Section~\ref{sec:boundaries} to measure the distance
		between the decision boundaries of two models. The gray 
		double-ended arrows illustrate the \emph{inter-boundary}
		distance between the two models in each direction.\\[-2em]}
	\label{fig:distance-directions}
\end{figure}

These directions are used to define two types of metrics:

\textbf{Minimum Distances:}
Given one of the above directions $\dir{}$, the 
\emph{minimum distance} from a point $\vec{x}$ to the model's decision 
boundary is defined as:
\begin{equation}
\label{eq:min-dist}
\mindist{\dir{}}(f, \vec{x}) \coloneqq \arg\min_{\epsilon > 0} f(\vec{x}+\varepsilon\cdot\dir{}) \neq f(\vec{x}) \;.
\end{equation}
\textbf{Inter-Boundary Distances:} We are interested in the distance 
\emph{between the decision boundaries} of different models 
(i.e., the bi-directional arrows in Figure~\ref{fig:distance-directions}). 
For a point $\vec{x}$ and a direction $\dir{}$ computed 
according to $f_1$ (e.g., $\dir{}=\dir{\adv}(f_1, \vec{x})$), we define the 
\emph{inter-boundary distance} between $f_1$ and $f_2$ as:
\begin{equation}
\label{eq:inter-boundary-dist}
\interdist{\dir{}}(f_1, f_2, \vec{x}) \coloneqq |\mindist{\dir{}}(f_1, \vec{x}) - \mindist{\dir{}}(f_2, \vec{x})| \;.
\end{equation}
In the adversarial direction, the \emph{inter-boundary} distance
is directly related to adversarial example transferability. If the 
distance is small, adversarial examples crafted from $f_1$ that cross
the decision boundary of $f_1$ are likely to also cross the decision 
boundary of $f_2$: they \emph{transfer}.

\subsection{Experiments with the MNIST and DREBIN Datasets}

We study three models trained on the MNIST and DREBIN tasks: a logistic
regression (LR), support vector machine (SVM), and  neural 
network (DNN architecture C in Table~\ref{table:mnist_archis}).
For all pairs of models, we compute the mean inter-boundary
distance in the three directions: legitimate, adversarial and random.
The mean minimum distance in the same direction acts as a baseline for these
values. 

In the input domain, we measure distances with the $\ell_2$ norm
and report their mean value
over all points in the test set.
We use the ($\ell_2$ norm) FGM~\cite{goodfellow2014explaining} to find adversarial 
directions 
for differentiable models (the LR and DNN) and the method 
from~\cite{papernot2016transferability}
for the SVM. To compute the minimum 
distance from a point to a decision boundary in a particular direction,
 we use a line search with step size $0.05$.
We do not round DREBIN features to their
closest binary value---in contrast to what was done in Section~\ref{sec:exploration}. 
If we did, the line search would become too coarse-grained and yield less accurate measurements
of the distances between boundaries. An overview of the results
described below is given in
Figure~\ref{fig:main-distance-results} for 
the MNIST dataset. Further results are 
in Appendix~\ref{ap:dir-dis}. 
These include MNIST results with the $\ell_1$ norm and results on 
DREBIN. The observations made below hold
for those experiments as well. 

\textbf{Minimum Distance Measurements: } 
Distances to the decision boundary are indicated by filled 
black bars in Figure~\ref{fig:main-distance-results}.
As expected, the distance is smallest in adversarial directions. 
Moreover, decision boundaries are further away
in random directions than in directions between the different 
classes. This corroborates the observation
that random noise does not usually cause
misclassification~\cite{szegedy2013intriguing}.

\begin{figure}[t]
	\includegraphics[width=\textwidth]{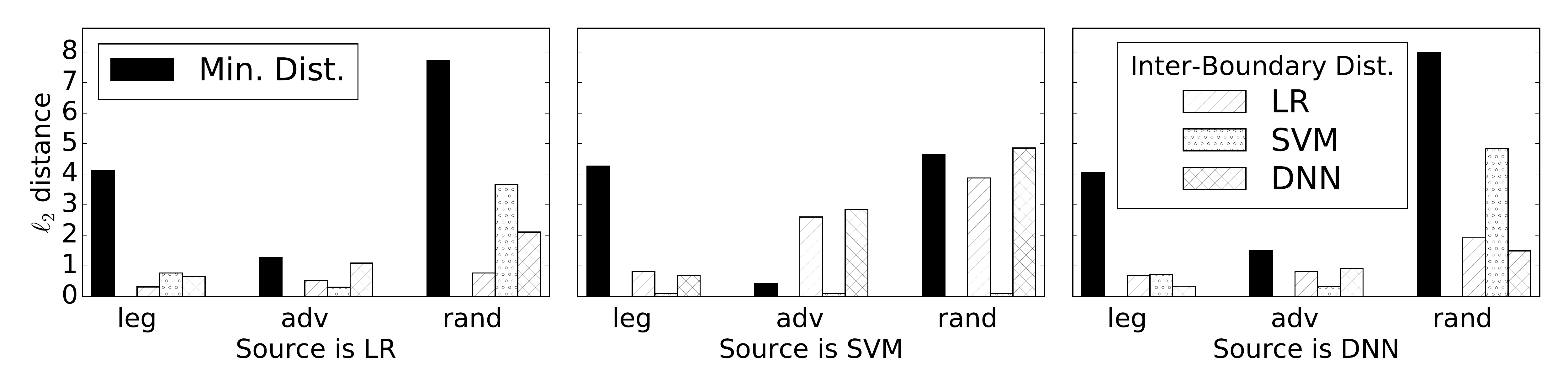}
	\vspace{-2em}
	\caption{Minimum distances and inter-boundary distances
		in three directions for MNIST models. 
		Each plot shows results for one source model 
		(Logistic Regression, Support Vector Machine, Deep Neural Network), 
		and all three classes of target models (one hatched bar per model class).
		Within each plot, bars are grouped 
		by direction (legitimate, adversarial and random). 
		The filled black bar shows the minimum distance to the 
		decision boundary for the source model.
		The adversarial search uses the FGM with 
		$\varepsilon=5$. For example, the left group in the 
		left plot shows that the minimal distance on the Logistic 
		Regression (LR) model in the legitimate direction is about $4$, 
		and that the distance between the LRs boundary and the boundaries 
		of other models in that direction is lower than $1$.\\[-1.5em]} 
	\label{fig:main-distance-results} 
\end{figure}

\textbf{Inter-Boundary Distances: } 
We now compare inter-boundary distances (hatched bars) 
with minimum distances (filled black bars). 
For most pairs of models and directions, the minimum distance
from a test input to the decision boundary is larger than the 
distance between the decision
boundaries of the two models in that direction. 
For the adversarial direction, this confirms our hypothesis 
on the ubiquity of transferability: for an adversarial example to
transfer, the perturbation magnitude needs to be only 
slightly larger than the \emph{minimum} perturbation required to fool the 
source model.\footnote{
The results with the SVM as source 
model are quantitatively different (the inter-boundary 
distances in the
adversarial directions are much larger). 
This is likely due to our SVM implementation
using a ``one-vs-rest'' strategy, which yields a 
different optimization problem for crafting adversarial examples.
}

\subsection{Impact of Adversarial Training on the Distance between Decision Boundaries}
\vspace*{-0.07in}

Surprisingly, it has been shown 
that adversarial examples can transfer even in the presence of explicit
defenses~\cite{goodfellow2014explaining,papernot2016practical}, such as 
distillation~\cite{papernot2016distillation} or adversarial 
training~\cite{goodfellow2014explaining}.
This suggests that these defenses do not significantly 
``displace'' a model's decision boundaries (and thus 
only marginally improve its robustness). Rather, 
the defenses prevent \emph{white-box} attacks 
because of \emph{gradient masking}~\cite{papernot2016towards}, i.e., 
they leave the decision boundaries in roughly the same location
but damage the gradient information 
used to craft adversarial examples. 
Hereafter, we investigate to 
what extent adversarial training displaces a model's decision 
boundaries.

We repeat the measurements of minimal and inter-boundary
distances with a pair of DNNs, one of which is adversarially
trained. We use the undefended model as source, and target the
adversarially trained model.
For all directions (legitimate, adversarial and random), we find 
that the inter-boundary distance between the source and target models
is increased compared to the baseline in 
Figure~\ref{fig:main-distance-results}.
However, this increase is too small 
to thwart transferability.
In the adversarial direction, the average inter-boundary distance
increases from $0.32$ (when the target is undefended) 
to $0.63$ (when the target is adversarially trained). The total
perturbation required to find a transferable input
(the minimum distance of $1.64$ summed with the inter-boundary
distance)
remains smaller than the 
adversarial perturbation of norm $5$. 
Thus, adversarial perturbations 
have sufficient magnitude to ``overshoot'' the decision boundary of 
the source model and still transfer to the adversarially trained 
model.

The transferability of adversarial examples from undefended 
models to adversarially trained models is studied extensively by 
Tramèr et al.~\cite{tramer2017ensemble}. The insights gained yield an 
improved adversarial training procedure, which increases the 
robustness of models to these black-box transferability-based attacks.

\section{Limits of Transferability}
\label{sec:limits}

Recall the hypothesis on the ubiquity of transferability formulated in 
the introduction:
\emph{
\begin{quoting}
If two models achieve low error for some task while also exhibiting 
low robustness to adversarial examples, adversarial examples crafted 
on one model transfer to the other.
\end{quoting} 
}
We first derive conditions on the data distribution $\mu$ under which a form of this 
hypothesis holds. However, we also show a counter-example: a specific task 
for which adversarial examples do not transfer between linear and 
quadratic models. This suggests that transferability is not an inherent property
 of non-robust ML models and that it may be possible to
 prevent black-box attacks (at least for some model classes) 
 despite the existence of adversarial examples.\footnote{
We consider adversaries that attack a target model by transferring 
adversarial examples crafted on a locally trained model. Note that 
if an adversary can mount a model theft 
attack~\cite{papernot2016practical, tramer2016stealing} 
that recovers a sufficiently close approximation of the target model,
 then any model vulnerable in a white-box setting can be attacked.}

\subsection{Sufficient Conditions for Transferability}
\label{ssec:sufficient}
\vspace*{-0.07in}

We present \emph{sufficient conditions} under which 
a specific type of adversarial perturbation transfers from input space 
to richer latent feature spaces in binary classification tasks. 
These perturbations are \emph{model agnostic}; they
shift data points in the direction given by the 
\emph{difference between the intra-class means}.
We consider binary classifiers $f(\vec{x}) = \vec{w}^\top \phi(\vec{x}) + b$ defined as the composition of 
an arbitrary feature mapping $\phi(\vec{x})$ and a linear classifier.
These include
linear models, polynomial models and feed-forward neural networks 
(where $\phi$ is defined by all but the last layer of the network).

Under natural assumptions, any linear classifier 
with low risk will be fooled by perturbations in the direction
of the difference in class means (this follows from a result by 
Fawzi et al.~\cite{fawzi2015analysis}).
Moreover, for classifiers over a richer latent feature set 
(e.g. quadratic classifiers), we relate robustness to these 
perturbations to the extent in which \emph{the direction between the 
class means in input space is preserved in feature space}. 
This is a property solely of the data 
distribution and feature mapping considered.

\paragraph{Model-Agnostic Perturbations.}

For a fixed feature mapping $\phi$, we consider the perturbation 
(in feature space) that shifts data points in the direction given 
by the difference in intra-class means. Let
\vspace*{-.03in}
\begin{equation}
\label{eq:mean-dir}
\vec{\delta}_\phi \coloneqq \frac12 \cdot (\meanfp - \meanfm) \;.
\end{equation}
where $\Dm$ and $\Dp$ respectively denote the data distributions of 
the positive and negative class.
For an input $(\vec{x},y)$, we define the feature-space perturbation 
$\vec{r}_\phi \coloneqq - \epsilon \cdot y \cdot \hat{\vec{\delta}}_\phi$, where 
$\hat{\vec{v}}$ denotes a non-zero vector $\vec{v}$ normalized to unit norm.
For large $\epsilon$, the feature-space perturbation 
$\vec{r}_\phi$ will fool any model $f$ if the weight 
vector $\vec{w}$ is aligned with the difference in class means 
$\vec{\delta}_\phi$, i.e. we have: 
\begin{equation}
\label{eq:assumption}
\Delta \coloneqq \hat{\vec{w}}^\top \hat{\vec{\delta}}_\phi > 0 \;.
\end{equation}
For some tasks and model classes, we empirically show that 
the assumption in \eqref{eq:assumption} must hold for $f$ to obtain low error.
Following~\cite{fawzi2015analysis}, we
need $\epsilon \approx \norm{\vec{\delta}_\phi}_2$ on average to fool $f$ 
(see Appendix~\ref{apx:proof-perturb}). 
For tasks with close class means 
(e.g., many vision tasks~\cite{fawzi2015analysis}), the 
perturbation is thus small.

Do model-agnostic perturbations in 
\emph{input space} transfer to models over richer latent feature 
spaces? Let $\vec{\delta}$ and $\vec{r}$ denote the difference 
in means and model-agnostic perturbation in input space, i.e.,
\begin{equation}
\label{eq:perturb-inp}
\vec{\delta} \coloneqq \frac12\cdot (\meanp - \meanm) \quad, \quad \vec{r} \coloneqq -\epsilon \cdot y \cdot \hat{\vec{\delta}} \;.
\end{equation}
If the weight vector $\vec{w}$ is aligned with the 
difference in class means in feature space $\vec{\delta}_\phi$, 
a sufficient condition for transferability is that $\vec{r}$ gets mapped 
to a perturbation in feature space that is closely aligned 
with $\vec{\delta}_\phi$, 
i.e., $\phi(\vec{x}+\vec{r}) \approx \phi(\vec{x}) + \vec{r}_\phi$.
Specifically, consider the orthogonal decomposition
\begin{equation}
\label{eq:alpha}
\phi(\vec{x}+\vec{r}) - \phi(\vec{x}) = \alpha \cdot \vec{\delta}_\phi + \beta \cdot \vec{\delta}_\phi^\bot \;,
\end{equation}
where $\vec{\delta}_\phi^\bot$ is orthogonal to $\vec{\delta}_\phi$ and of equal 
norm (i.e., $||\vec{\delta}_\phi^\bot||_2 = ||\vec{\delta}_\phi||_2$).
Intuitively, we want $\alpha$ to be 
large, and $\beta$ small. If so, we say the mapping 
$\phi$ is \emph{pseudo-linear} in $\vec{r}$. Our main theorem (which we prove 
in Appendix~\ref{apx:proof-perturb}) is as follows:

\begin{theorem}
	\label{thm:perturb-full}
	Let $f(\vec{x}) = \vec{w}^\top \phi(\vec{x}) + b$, where $\phi$ is a fixed feature 
	mapping. Let $\vec{r}$ be defined as in~\eqref{eq:perturb-inp}. 
	If the angle $\Delta \coloneqq \hat{\vec{w}}^\top \hat{\vec{\delta}}_\phi$ 
	is not too small, i.e., $\Delta \gg \Pr_{\Dist} (\mathtt{sign}(f(\vec{x})) \neq y)$, and 
	$\Exp_{\Dist}(- y \alpha) \geq 1 + \frac{1-\Delta}{\Delta} \Exp_{\Dist}(\abs{\beta})$, 
	then adversarial examples $\vec{x}+\vec{r}$ will fool $f$ with 
	non-zero probability.
\end{theorem}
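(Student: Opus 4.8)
The plan is to reduce the existential conclusion to a first-moment statement. Since $y\,f(\vec{x}+\vec{r})$ is a real random variable over $(\vec{x},y)\sim\Dist$, it suffices to prove $\Exp_{\Dist}[\,y\,f(\vec{x}+\vec{r})\,]<0$: a random variable with strictly negative mean must be negative on a set of positive measure, which is exactly the assertion that $\vec{x}+\vec{r}$ fools $f$ with non-zero probability. So the entire argument is about controlling this single expectation. First I would expand the perturbed margin using the linearity of $f$ in $\phi$ together with the orthogonal decomposition \eqref{eq:alpha}. Substituting $\phi(\vec{x}+\vec{r})=\phi(\vec{x})+\alpha\,\vec{\delta}_\phi+\beta\,\vec{\delta}_\phi^\bot$ and multiplying by $y\,\vec{w}^\top$ gives
\[
y\,f(\vec{x}+\vec{r}) = y\,f(\vec{x}) + y\alpha\,\vec{w}^\top\vec{\delta}_\phi + y\beta\,\vec{w}^\top\vec{\delta}_\phi^\bot .
\]

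The aligned term is exact: $\vec{w}^\top\vec{\delta}_\phi=\norm{\vec{w}}_2\,\norm{\vec{\delta}_\phi}_2\,\Delta$. For the cross term I would apply Cauchy--Schwarz together with $\norm{\vec{\delta}_\phi^\bot}_2=\norm{\vec{\delta}_\phi}_2$ and the orthogonality $\vec{\delta}_\phi^\bot\perp\vec{\delta}_\phi$, so that only the component of $\hat{\vec{w}}$ orthogonal to $\hat{\vec{\delta}}_\phi$ contributes; this bounds $\abs{\vec{w}^\top\vec{\delta}_\phi^\bot}$ by a $\Delta$-dependent factor times $\abs{\beta}\,\norm{\vec{w}}_2\,\norm{\vec{\delta}_\phi}_2$. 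Dividing through by the common scale $\norm{\vec{w}}_2\,\norm{\vec{\delta}_\phi}_2$ then puts everything into the normalized units in which $\alpha$ and $\beta$ are already expressed; in these units the ``$1$'' in the second hypothesis is precisely the displacement along $\hat{\vec{\delta}}_\phi$ that the Fawzi et al.\ criterion (Appendix~\ref{apx:proof-perturb}) flags as sufficient to fool the feature-space classifier.

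Next I would take expectations and substitute the hypothesis $\Exp_{\Dist}(-y\alpha)\ge 1+\tfrac{1-\Delta}{\Delta}\Exp_{\Dist}(\abs{\beta})$. This bound is engineered so that, after multiplying back by $\Delta$, the useful contribution $\Delta\,\Exp_{\Dist}(-y\alpha)$ dominates the harmful cross-term contribution: the factor $\tfrac{1-\Delta}{\Delta}\Exp_{\Dist}(\abs{\beta})$ is exactly what is needed to cancel the $\beta$-term left over from the Cauchy--Schwarz estimate. What remains after this cancellation is the clean-margin term $\Exp_{\Dist}[\,y\,f(\vec{x})\,]$, which must be shown not to overwhelm the gain.

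The step I expect to be the real obstacle is precisely controlling that clean-margin term. For a low-error classifier it is positive, and purely from the decomposition it is unbounded, so it works against us and cannot be handled by bookkeeping alone. Here the argument must import external structure: I would invoke the Fawzi et al.\ result cited before \eqref{eq:assumption} --- that for a classifier well aligned with $\vec{\delta}_\phi$ a displacement of roughly $\norm{\vec{\delta}_\phi}_2$ (normalized displacement $1$) suffices on average --- to bound the expected normalized clean margin, and then use the first hypothesis $\Delta\gg\Pr_{\Dist}(\mathtt{sign}(f(\vec{x}))\neq y)$ to guarantee that the small mass of misclassified or large-margin points cannot spoil this bound. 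Pinning down the exact constant that lets the ``$1$'' absorb the expected clean margin, and verifying the precise $\Delta$-dependent factor in the cross-term estimate so that it matches $\tfrac{1-\Delta}{\Delta}$, is the delicate part; once those are in hand, combining the three pieces yields $\Exp_{\Dist}[\,y\,f(\vec{x}+\vec{r})\,]<0$ and hence misclassification with positive probability.
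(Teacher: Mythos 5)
Your plan is, in structure, the paper's own argument: the paper also expands $\vec{w}^\top\phi(\vec{x}+\vec{r})$ through the decomposition \eqref{eq:alpha}, uses the Fawzi-et-al.-style bound (Lemma~\ref{lemma:perturb}) to argue that when $\Delta \gg L(f) \coloneqq \Pr_{\Dist}(\mathtt{sign}(f(\vec{x}))\neq y)$ a normalized aligned displacement of $\approx 1$ suffices on average, and then inflates $-y\alpha$ by $\frac{1-\Delta}{\Delta}\abs{\beta}$ to cancel the orthogonal component. Your one real addition is the first-moment reduction---$\Exp_{\Dist}[\,y f(\vec{x}+\vec{r})\,]<0$ forces misclassification on a set of positive measure---which is a correct and clean formalization of the ``non-zero probability'' conclusion that the paper only asserts informally.

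The step you defer as ``the delicate part'' is, however, a genuine obstruction rather than bookkeeping, and it is exactly where the paper's proof hand-waves. Honest Cauchy--Schwarz on the cross term gives, pointwise,
\[
\abs{\vec{w}^\top \vec{\delta}_\phi^\bot} \;\leq\; \sqrt{1-\Delta^2}\,\norm{\vec{w}}_2\,\norm{\vec{\delta}_\phi}_2\,,
\]
because the component of $\hat{\vec{w}}$ orthogonal to $\hat{\vec{\delta}}_\phi$ has norm $\sqrt{1-\Delta^2}$, and $\sqrt{1-\Delta^2} > 1-\Delta$ for every $\Delta\in(0,1)$ (at $\Delta=\tfrac12$: $\tfrac{\sqrt{3}}{2}$ versus $\tfrac12$). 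The paper instead asserts $\hat{\vec{w}}^\top\hat{\vec{\delta}}_\phi^\bot \leq 1-\Delta$, which is false in general, and that unjustified inequality is the only reason the hypothesis constant $\frac{1-\Delta}{\Delta}$ appears to suffice. Executing your plan rigorously therefore requires the stronger hypothesis $\Exp_{\Dist}(-y\alpha) \geq 1 + \frac{\sqrt{1-\Delta^2}}{\Delta}\Exp_{\Dist}(\abs{\beta})$, plus strict slack to absorb the residual $\frac{4\,L(f)\,\max_{\vec{x}}\norm{\phi(\vec{x})}_2}{\Delta\,\norm{\vec{\delta}_\phi}_2}$ from Lemma~\ref{lemma:perturb} (the condition $\Delta \gg L(f)$ only makes that term small, not zero, and the theorem's inequality has no margin to spare). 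So your proposal cannot be completed as stated---but neither can the paper's proof without the same repairs; the difference is that you flag the problem where the paper papers over it.
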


\paragraph{Experiments on MNIST.}

We empirically assess the validity of our analysis on the binary 
task of distinguishing $3$'s and $7$'s from the MNIST dataset.
We use linear models, quadratic models 
($f(\vec{x}) = \vec{x}^\top A \vec{x} + b$, where $A$ is a symmetric matrix), DNNs 
and CNNs. All models are interpreted as having the form 
$f(\vec{x}) = \vec{w}^\top \phi(\vec{x}) + b$ for some appropriate mapping $\phi$.

All models satisfy the assumption in Equation~\eqref{eq:assumption}: 
the linear model in 
feature space is strongly aligned with the difference in class means.
In fact, we find that any 
linear or quadratic model that does not satisfy this assumption 
has low accuracy for this task.
We use the mean-shift perturbation $\vec{r}$ defined in~\eqref{eq:perturb-inp} with $\epsilon 
= 4$. Examples of perturbed inputs are in Figure~\ref{fig:mean-shifts}; the 
correct labels are self-evident despite the appearance of a faint ``ghost'' 
digit in some images.

\begin{figure}[h]
	\centering
	\vspace{-0.45em}
	\includegraphics[width=0.6\textwidth]{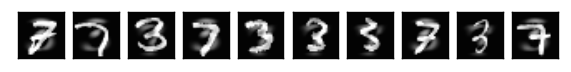}
	
	\vspace{-0.5em}
	\caption{MNIST digits perturbed by adding the difference in 
		class means.\\[-0.45em]}
	\label{fig:mean-shifts}
\end{figure}

Table~\ref{tab:mean-shift} reports the accuracy of each model on clean data 
and perturbed inputs. All models are 
partially fooled by these small model-agnostic perturbations. 
In addition, we report whether each model
satisfies the technical conditions of Theorem~\ref{thm:perturb-full} 
(i.e., the \emph{pseudo-linearity} of mapping $\phi$ with respect 
to the perturbation $\vec{r}$). 
Since the quadratic feature mapping satisfies these conditions, 
\emph{any} quadratic classifier with high accuracy (and thus positive 
alignment between $\vec{w}$ and $\vec{\delta}_\phi$) will be  
vulnerable to these perturbations. Thus, 
\emph{any accurate pair of linear and quadratic classifiers for this 
task will exhibit transferability}. This is a special case 
of the hypothesis formulated at the beginning of this section.
The pseudo-linearity condition does not hold for the CNN: the 
perturbation direction is not sufficiently preserved 
in feature space for our result to hold unconditionally. However, 
our bounds are (expectedly) somewhat loose and we find that the 
CNN still misclassifies $24\%$ of perturbed samples. 

To generalize this approach to multi-class settings, we  
can define perturbations through pairwise differences in means: $
\Exp_{\Dist_i}[\vec{x}] - \Exp_{\Dist_j}[\vec{x}]$,
where $\vec{x}$ has label $y=i$, and $j\neq i$ is a chosen target 
class. We can also use the $\ell_\infty$ norm, by setting 
$\vec{r} \coloneqq - \epsilon \cdot 
\mathtt{sign}(\Exp_{\Dist_i}[\vec{x}] - \Exp_{\Dist_j}[\vec{x}])$.
With this adversarial perturbation (for $\epsilon=0.3$) the same 
model architectures trained on the full MNIST dataset 
attain accuracy between $2\%$ (for the linear 
model) and $66\%$ (for the CNN).

\begin{table}
\caption{Results for the model agnostic adversarial perturbation 
	that shifts points in the direction of the difference in class 
	means. We used $\epsilon = 4$, for $\vec{r}$ as defined 
	in~\eqref{eq:perturb-inp}. The angle $\Delta$ is measured 
	between the weights $\vec{w}$ of the final linear layer and the 
	difference in class means in feature space, $\vec{\delta}_\phi$.\\[-0.5em]
}
\centering
\begin{tabular}{@{} l c c c c @{}}
\textbf{Model} & \textbf{Acc.} & \textbf{Acc. on shifted data} & ${\Delta}$ & $\phi$ satisfies pseudo-linearity in $\vec{r}$ \\
\toprule
Linear & $98.2\%$ & $39\%$ & $0.55$ & Yes\\
Quadratic & $99.3\%$ & $51\%$ & $0.51$ & Yes\\
\midrule
DNN & $99.3\%$ & $43\%$ & $0.68$ & Yes\\
CNN & $99.9\%$ & $76\%$ & $0.79$ & No \\
\midrule 
\end{tabular}
\label{tab:mean-shift}
\\[-0.5em]
\end{table}

\paragraph{Experiments on DREBIN.}
We evaluated the same model-agnostic perturbations on the DREBIN 
dataset. It is noteworthy that on DREBIN, the class mean is less 
semantically meaningful than on MNIST (i.e., the mean feature vector 
of all malware applications is unlikely to even belong to the input 
space of valid programs). Yet, the same perturbation (followed by 
rounding to binary features) reliably fools both a linear model and 
a DNN. By modifying fewer than $10$ of the $1000$ binary features on 
average, the accuracy of the linear model drops to $65\%$ and that 
of the DNN to $72\%$.

\subsection{XOR Artifacts}
\label{ssec:xor}

The previous analysis shows that transferability can be inherent 
if models are defined on (or learn) feature representations that 
preserve the non-robustness of input-space features. We now show that if 
models were to learn very different sets of (possibly non-robust) 
features, transferability need not hold.
As a thought-experiment, consider two models 
trained on \emph{independent subsets} of the input features. 
In this setting, 
adversarial examples crafted to specifically mislead one of the models 
would not transfer to the other. However, it is 
expected that adversarial examples crafted on a model trained over 
the full feature set would still transfer to both models.
Hereafter, we ask a more realistic question: for a given task, can 
models with access to the \emph{same set of input features}  
learn representations for which adversarial examples do not transfer 
to one another---even when these models are
not robust to their own adversarial examples.
We provide a simple example of a task for which this is true.

We first train linear and quadratic models on the  
MNIST 3s vs.~7s task. Both models get over $98\%$ accuracy. 
FGM examples (with $\norm{\vec{r}}_2 = 4$) transfer between the models 
at a rate above $60\%$.
We then consider a variant of this dataset with pixels taking 
values in $\Reals^d$, and where a special 
``XOR artifact'' is added to images: for two of the center pixels, 
images of 3s contain a negative XOR (i.e., one of the pixels has 
positive value, the other negative), and images of 7s contain 
a positive XOR (both pixels have either positive or negative value). 
Some examples are shown in Figure~\ref{fig:mnist-xor}. 
Intuitively, this alteration will be ignored by the linear model 
(the artifact is not linearly separable) but not by the quadratic 
model as the XOR product is a perfect predictor.

\begin{figure}[h]
	\centering
	\includegraphics[width=0.6\textwidth]{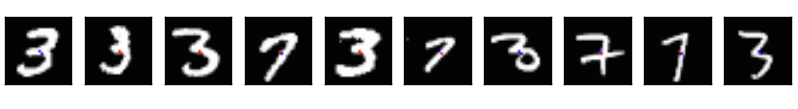}
	
	\vspace{-0.5em}
	\caption{MNIST digits with an additional XOR artifact. 
		Best viewed zoomed in and in color. Blue pixels are positive, 
		red pixels are negative.}
	\label{fig:mnist-xor}
\end{figure}

Both models trained on this data remain vulnerable to small, 
\emph{but different}, adversarial perturbations. The linear model 
is fooled by standard FGM examples and the quadratic model is 
fooled by flipping the sign of the two center pixels. 
Neither of these perturbations transfers to the other model. 
Fooling both models simultaneously 
requires combining the perturbations for each individual model.
Our experiment on this artificially altered dataset demonstrates 
how different model classes may learn very different concepts that, 
while strongly predictive, have little correspondence with our 
visual perception of the task at hand. It is interesting to consider 
whether any real datasets might exhibit a similar property, and 
thus result in learned models exhibiting low transferability. 

\section*{Acknowledgments}

We thank Ben Poole and Jacob Steinhardt for feedback on early versions of this work.
Nicolas Papernot is supported
by a Google PhD Fellowship in Security. 
Research was supported in part by the Army Research Laboratory,
under Cooperative Agreement Number W911NF-13-2-0045 (ARL Cyber Security
CRA), and the Army Research Office under grant W911NF-13-1-0421. 
The views and conclusions contained in this document are those of the
authors and should not be interpreted as representing the official policies,
either expressed or implied, of the Army Research Laboratory or the U.S.
Government. The U.S.\ Government is authorized to reproduce and distribute
reprints for government purposes notwithstanding any copyright notation hereon.

\newpage
\appendix
\section{Neural Network Architectures}

\begin{table}[h!]
\caption{Neural network architectures used in this work. 
	Conv: convolutional layer, FC: fully connected layer.\\}
\centering
\begin{tabular}{@{} c c c@{}c @{}}
\multicolumn{4}{c}{\textbf{Model ID}}  \\
\toprule
A & B & C \\
\midrule
Conv(64, 5, 5) + Relu & Dropout(0.2) & \multirow{2}{*}{$\left[\shortstack[c]{FC(300) + Relu \\ Dropout(0.5) \\[-1.2em]}\right]$} & \multirow{2}{*}{$\times 4$} \\
Conv(64, 5, 5) + Relu & Conv(64, 8, 8) + Relu & \\
Dropout(0.25) & Conv(128, 6, 6) + Relu &  FC + Softmax\\
FC(128) + Relu & Conv(128, 5, 5) + Relu & \\
Dropout(0.5) & Dropout(0.5) & \\
FC + Softmax & FC + Softmax &\\
 \midrule
\end{tabular}

\label{table:mnist_archis}

\end{table}

\section{Finding Multiple Adversarial Directions}
\label{apx:multi-adv}

Our first two techniques iteratively solve some optimization problem 
with an additional constraint enforcing orthogonality of the solutions.
The latter two estimate the dimensionality of the adversarial space more 
directly, using first-order approximations of the model's output or loss.

\paragraph{Second-Order Approximations.}
Consider an analog of the FGM with a second-order approximation of the 
loss function. That is, we want a perturbation $\vec{r}$ that solves 
$
\max_{\|\vec{r}\| \leq \epsilon}\ \vec{g}^\top \vec{r} + \frac12 \vec{r}^\top H \vec{r}
$,
where $\vec{g}=\nabla_{\vec{x}} J(\vec{x}, y)$ is the gradient and 
$H=\nabla^2_{\vec{x}}
J(\vec{x}, y)$ is the Hessian of the loss function.

We can find orthogonal perturbations as follows: Given a first 
solution $\vec{r}_1$ to the above optimization problem, we 
substitute $\vec{g}$ and $H$ by $\vec{g}' \coloneqq P\vec{g}$ and 
$H' \coloneqq P^\top H P$, where $P$ is a projection matrix onto the 
space orthogonal to $\vec{r}_1$. The solution $\vec{r}_2$ to the 
updated problem is then orthogonal to $\vec{r}_1$.

\paragraph{Convex Optimization for Piecewise Linear Models.}

An entirely different approach to finding adversarial examples was 
proposed by Bastani et al.~\cite{bastani2016measuring} for the 
(ubiquitous) case where the model $f$ is a piecewise linear function 
(here we identify the output of $f$ with the vector of unnormalized 
log-probabilities preceding the softmax output). 
Instead of relying on a linear \emph{approximation} of the model's
output, their idea is to restrict the search of adversarial 
perturbations for an input $\vec{x}$ to the convex region $Z(\vec{x})$ 
around $\vec{x}$ on which $f$ is linear.

Let $\vec{x}$ be a point that $f$ classifies as class $s$ 
(i.e., $\argmax f_i(\vec{x}) = s$). 
Further let the ``target'' $t$ be the class to which 
$f$ assigns the second-highest score. 
The approach from~\cite{bastani2016measuring} 
consists in finding a perturbation $\vec{r}$ as a solution to a convex 
program with the following constraints, denoted $C(\vec{x})$:

\begin{itemize}
	\item $\vec{x}+\vec{r} \in Z(\vec{x})$, which can be expressed as a set of linear constraints (see~\cite{bastani2016measuring} for more details).
	\item $\|\vec{r}\| \leq \epsilon$ (although Bastani et al. focus on the $l_\infty$ norm, any convex norm can be used).
	\item $f_t(\vec{x}+\vec{r}) < f_s(\vec{x}+\vec{r})$.
\end{itemize}

If all constraints are satisfied, $\vec{x}+\vec{r}$ must be misclassifed 
(possibly as another class than the target $t$).

We suggest an extension to this LP for finding orthogonal adversarial 
directions. After finding a
solution $\vec{r}_1$ with constraint set $C(\vec{x})$, we extend the constraints 
as $C'(\vec{x}) \coloneqq C(\vec{x}) \wedge (\vec{r}^\top \vec{r}_1 = 0)$, and solve the 
convex program again to find a new solution $\vec{r}_2$ orthogonal to $\vec{r}_1$.
This process is iterated until the constraints are unsatisfiable.

\paragraph{Independent JSMA.}
For discrete features, such as in the DREBIN dataset, it is 
difficult to find (valid) orthogonal perturbations using the 
previous methods. 

Instead, we propose a simple non-iterative variant of the 
Jacobian-based Saliency Map Attack (JSMA) 
from~\cite{papernot2016limitations}. 
This iterative method evaluates the Jacobian matrix of 
the model $f$ and ranks features in a saliency map that encodes 
the adversarial goal~\cite{papernot2016limitations}.
A few input features (typically one or two) with high saliency
scores are perturbed (i.e., increased to their maximum value) 
at each iteration, before repeating the process with the modified input.

We compute the saliency map as in~\cite{papernot2016limitations}
and rank the features by their saliency scores. Given a target 
dimensionality $k$ and a maximal perturbation budget $B$, we group 
the most salient features into $k$ bins of at most $B$ features 
each, such that the sum of saliency scores in each bin are roughly 
equal (we greedily add new features into the non-full bin with 
lowest total score). As the $k$ sets of features are independent by 
construction, they naturally yield orthogonal perturbations. 
We repeat this process for multiple values of $k$ to estimate the 
dimensionality of the adversarial space.

\subsection{Prefiltered Evaluation of the Transfer Rate}
\label{ssec:transferability-def}

To evaluate the transferability from one model to another, we report
accuracy in ``the prefiltered case,'' as done by \cite{kurakin2016adversarial}.
That is, we report the accuracy of the target model on adversarial examples
that meet three criteria: 1) the original example did not fool the first model,
and 2) the adversarial example successfully fools the first model.
Departing from the metric used by \cite{kurakin2016adversarial}, we also
require that 3) the original example did not fool the target model.
This evaluation metric focuses on the transferability property rather than
effects like natural inaccuracy of the models or failure of the adversarial
example construction process.
Accuracy in this prefiltered case can be considered the "transfer rate" of
the adversarial examples.

\subsection{Experiments}

\begin{table}
	\caption{Comparison of techniques for finding multiple orthogonal 
		perturbations on MNIST. We report: (1) the transfer rate of 
		each method when used to find a single adversarial direction, 
		(2) the average number of (successful) orthogonal adversarial 
		perturbations found on the source model f$_{\text{src}}$ and 
		(3) the average number of these perturbations that transfer to 
		model f$_{\text{target}}$.\\}
	
	\centering
	\begin{tabular}{@{} l l c r r@{} }
		\textbf{Model} & \textbf{Method} & \textbf{Transfer Rate} & \textbf{$\#$ of dirs on f$_{\text{src}}$} & \textbf{$\#$ of dirs on f$_{\text{target}}$}\\
		\toprule
		\multirow{3}{0.4in}{DNN} & Second-Order & $95\%$ & $1.00$ & $0.95$\\
		& Convex Opti. & $17\%$ & $\ 2.58$ & $\ 0.45$\\
		& GAAS & $94\%$ & $44.28$ & $24.87$ \\
		\midrule
		CNN & GAAS & $69\%$ & $\ 15.18$ & $\ 2.24$ \\
		\midrule
	\end{tabular}
	\label{tab:multi-adv}
\end{table}

\paragraph{Second-Order Approximations.}
This attack is not interesting for rectified linear units, because
the Hessian of the logits with respect to the inputs
is zero (or undefined) everywhere.
In order to study this attack with a nonzero Hessian, 
we use hyperbolic tangent rather than rectified linear units
in the source model.
Yet, we still find that $H$ has only small entries and that the 
optimal solution to the quadratic problem is close to perfectly 
aligned with the gradient. The best solution that is orthogonal to 
the gradient is non-adversarial. Thus, the second-order approximation 
is of little use in finding further adversarial directions.

\paragraph{Convex Optimization.} On the source model f$_{\text{src}}$, the LP  yields an
adversarial example for $93\%$ of the test inputs. 
By solving the LP iteratively, we find a little over $2$ orthogonal 
perturbations on average, and $47$ for the ``most vulnerable'' input. 
As we restrict the search to a region where the model is linear, 
these orthogonal perturbations span a dense subspace. That is, any 
perturbation of norm $\epsilon$ in the spanned subspace is also 
adversarial.

However, the
transfer rate for these perturbations is low ($17\%$ compared to 
$95\%$ for the FGM). When considering the full adversarial subspace, 
the transferability increases only marginally: for $70\%$ of the 
inputs, none of the successful perturbations on model 
f$_{\text{src}}$ transfers to model f$_{\text{target}}$.
Thus, although the LP approach may find smaller adversarial 
perturbations and in more directions than methods such as 
FGM~\cite{bastani2016measuring}, we also find that it is less 
effective at yielding transferable perturbations.\footnote{
	The approach of Bastani et
	al.~\cite{bastani2016measuring} sometimes fails to produce adversarial
	examples. For instance, for architecture C in Table~\ref{table:mnist_archis} and
	perturbations $\norm{\vec{r}}_2 \leq 5$, the obtained LP is solvable 
	for $0\%$ of tested inputs, whereas the FGSM always yields a 
	successful adversarial example.}

\paragraph{Independent JSMA.}
We evaluated the JSMA variant on the DREBIN dataset, for the same 
DNN models as above. With a maximal budget of $10$ flipped features 
per perturbation, we find at least one successful perturbation for 
$89\%$ of the inputs, and $42$ successful independent perturbations 
on average. Of these, $22$ transfer to the target model on average. 

Although we did not verify that all these perturbations truly retain 
the (non)-malicious nature of the input program, we note that 
flipping a \emph{random} subset of $10$ input features does not 
cause the models to misclassify at a higher rate than on clean data.
As our pre-processing of the DREBIN dataset selects those 
features that have a large impact on classification (and thus are 
often useful to adversaries), we don't expect the number of 
independent perturbations to increase significantly for variants of 
DREBIN with larger feature dimensionality.

\section{Proofs}

\subsection{Proof of Lemma~\ref{lemma:orthogonal}}
\label{apx:proof-ortho}

We begin with the upper bound. Let 
$\hat{\vec{r}}_i \coloneqq \frac{\vec{r}_i}{\norm{\vec{r}_i}_2}$. 
Note that 
if $\vec{r}_1, \vec{r}_2, \dots, \vec{r}_k$ are orthogonal, we have
\[
\|\vec{g}\|_2^2  \geq \sum_{i=1}^k \abs{\vec{g}^\top \hat{\vec{r}}_i}^2 \geq \sum_{i=1}^k \frac{\abs{\vec{g}^\top \vec{r}_i}^2}{\norm{\vec{r}_i}_2^2} \geq k \cdot \alpha^2 \cdot \norm{\vec{g}}_2^2 \;,
\]
which implies $k \leq \left\lfloor \frac{1}{\alpha^2}\right\rfloor$. 
If $\alpha < \frac{1}{\sqrt{d}}$, the bound $k \leq d$ is trivial.

We prove the lower bound with a concrete construction of the 
vectors $\vec{r}_i$. Let $\vec{e}_1, \vec{e}_2, \dots, \vec{e}_d$ 
denote the basis vectors in $\Reals_d$. 
Let $k = \left\lfloor \frac{1}{\alpha^2}\right\rfloor \leq d$ and 
denote by $R$ a rotation matrix that satisfies 
$R \vec{g} = \norm{\vec{g}}_2 \cdot \vec{e}_1$ (i.e., $R$ rotates 
vectors into the direction given by $\vec{g}$). 
Let $\vec{z}\coloneqq \sum_{i=1}^k k^{-\frac12}\cdot \vec{e}_i$, 
and let $S$ be the rotation matrix that satisfies 
$S \vec{z} = \vec{e}_1$. Then, it is easy to see that 
$Q\coloneqq S^\top R$ is a rotation matrix that satisfies 
$Q\vec{g} = \norm{\vec{g}}_2 \cdot \vec{z}$.

The vectors $\vec{r}_i \coloneqq Q^\top \vec{e}_i$ for 
$1 \leq i \leq k$ (these are the first $k$ rows of $Q$) 
are orthonormal, and satisfy 
$\vec{g}^\top \vec{r}_i 
= \norm{\vec{g}}_2 \cdot \vec{z}^\top \vec{e}_i 
= \norm{\vec{g}}_2 \cdot k^{-\frac12} \geq \norm{\vec{g}}_2 \cdot \alpha$.

\subsection{Proof of Theorem~\ref{thm:perturb-full}}
\label{apx:proof-perturb}

We begin by deriving the average $\ell_2$-norm of the perturbation 
$\vec{r}_\phi$ in the latent feature space, so that $f$ misclassifies \emph{all} 
inputs. Without loss of generality, we assume that 
$b < \max_{\vec{x} \in \inpspace} \lvert \vec{w}^\top \phi(\vec{x}) \rvert$ (i.e., the sign 
of the classifier is not constant). The expected $0$-$1$ loss of $f$ 
over some data distribution $\Dist$ is denoted 
$L(f) \coloneqq \Pr_{\Dist} (\mathtt{sign}(f(\vec{x})) \neq y)$.

\begin{lemma}
\label{lemma:perturb}
Let $f=\vec{w}^\top \phi(\vec{x}) + b$ with $L(f) < \frac12$ and 
$\Delta\coloneqq \hat{\vec{w}}^\top \hat{\vec{\delta}}_\phi > 0$.
For any $\vec{x} \in \inpspace$, let $\vec{r}_\phi$ be the smallest 
(in $\ell_2$-norm) perturbation aligned with $\vec{\delta}_\phi$, such 
that $\mathtt{sign}(\vec{w}^\top (\phi(\vec{x})+\vec{r})) \neq y$. 
Then,
\[
\Exp_{\Dist}(\norm{\vec{r}_\phi}_2) \leq \norm{\vec{\delta}_\phi}_2 
+ \frac{4\cdot L(f)\cdot \max_{\vec{x} \in \inpspace}\norm{\phi(\vec{x})}_2}{\Delta} \;.
\]
\end{lemma}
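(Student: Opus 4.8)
The plan is to bound the norm of the minimal aligned perturbation $\vec{r}_\phi$ by analyzing the geometry of shifting feature vectors along the one-dimensional direction $\hat{\vec{\delta}}_\phi$. First I would reduce everything to a one-dimensional problem by projecting onto $\hat{\vec{\delta}}_\phi$. Since $\vec{r}_\phi$ is constrained to be aligned with $\vec{\delta}_\phi$, the quantity $\mathtt{sign}(\vec{w}^\top(\phi(\vec{x})+\vec{r}))$ changes only through the scalar $\vec{w}^\top \vec{r} = \pm \norm{\vec{r}_\phi}_2 \cdot \norm{\vec{w}}_2 \cdot \Delta$, using the definition $\Delta = \hat{\vec{w}}^\top\hat{\vec{\delta}}_\phi$. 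The smallest norm needed to flip the sign of $f$ is therefore the distance from the current margin $f(\vec{x}) = \vec{w}^\top\phi(\vec{x}) + b$ to zero, divided by the rate $\norm{\vec{w}}_2 \Delta$ at which the margin moves. Concretely, I expect $\norm{\vec{r}_\phi}_2 = \frac{\lvert f(\vec{x})\rvert}{\norm{\vec{w}}_2 \cdot \Delta}$ on points where the perturbation must push toward and across the boundary.

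Next I would take the expectation of this expression and split the data according to whether $f$ already classifies $\vec{x}$ correctly. The key observation is that for correctly classified points the required shift corresponds to moving the projected feature mean of class $y$ across the boundary, and the natural scale of this shift is governed by the separation between the two class means projected onto $\hat{\vec{\delta}}_\phi$. By definition of $\vec{\delta}_\phi = \frac12(\meanfp - \meanfm)$, the projected gap between the class means in the $\hat{\vec{\delta}}_\phi$ direction is exactly $2\norm{\vec{\delta}_\phi}_2$, so I expect the ``typical'' perturbation to contribute roughly $\norm{\vec{\delta}_\phi}_2$ to the average — this is the first term in the bound and matches the intuition that perturbation size tracks the closeness of class means. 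The remaining contribution comes from misclassified points, i.e. those with $\mathtt{sign}(f(\vec{x})) \neq y$, which constitute a fraction $L(f)$ of the distribution.

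For the misclassified points I would bound the individual perturbation norm $\norm{\vec{r}_\phi}_2 = \frac{\lvert f(\vec{x})\rvert}{\norm{\vec{w}}_2\Delta}$ from above using $\lvert f(\vec{x})\rvert = \lvert \vec{w}^\top\phi(\vec{x}) + b\rvert \leq 2\norm{\vec{w}}_2 \max_{\vec{x}}\norm{\phi(\vec{x})}_2$, invoking the assumption $b < \max_{\vec{x}}\lvert\vec{w}^\top\phi(\vec{x})\rvert$ to control the bias term so that $\lvert f(\vec{x})\rvert$ is comparable to $\norm{\vec{w}}_2$ times the feature radius. After dividing by $\norm{\vec{w}}_2\Delta$ the $\norm{\vec{w}}_2$ cancels, leaving a per-point bound of order $\frac{\max_{\vec{x}}\norm{\phi(\vec{x})}_2}{\Delta}$; weighting by the probability $L(f)$ that a point is misclassified produces the second term $\frac{4 L(f)\max_{\vec{x}}\norm{\phi(\vec{x})}_2}{\Delta}$, where the factor $4$ absorbs the constants from the $b$-bound and the symmetric-means normalization. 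Combining the two cases and taking expectations gives the stated inequality.

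The main obstacle I anticipate is making the ``typical perturbation contributes $\norm{\vec{\delta}_\phi}_2$'' step rigorous: the clean separation argument works cleanly only in expectation relative to the projected means, so I would need to argue carefully that averaging $\lvert f(\vec{x})\rvert / (\norm{\vec{w}}_2\Delta)$ over correctly classified points of both classes telescopes to the mean separation, exploiting that the boundary $f=0$ sits between the two projected class means precisely because $L(f) < \frac12$ and $\Delta > 0$. Handling the bias $b$ and the correctly-versus-incorrectly-classified split simultaneously — ensuring no double counting and that the constant $4$ is genuinely sufficient — is the delicate bookkeeping; the underlying one-dimensional geometry is otherwise straightforward.
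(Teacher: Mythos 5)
Your first step---the one-dimensional reduction giving $\norm{\vec{r}_\phi}_2 = \abs{f(\vec{x})}/(\norm{\vec{w}}_2\Delta) = \abs{f(\vec{x})}\,\norm{\vec{\delta}_\phi}_2/(\vec{w}^\top\vec{\delta}_\phi)$---is exactly the paper's starting point and is correct. The gap is in your accounting of the two terms of the bound, which is backwards, and one of your two intermediate claims is false. First, by the lemma's definition $\vec{r}_\phi$ is the \emph{smallest} perturbation making $\mathtt{sign}(\vec{w}^\top(\phi(\vec{x})+\vec{r})) \neq y$; if $\vec{x}$ is already misclassified this holds at $\vec{r}=\vec{0}$, so misclassified points contribute exactly zero to $\Exp_{\Dist}(\norm{\vec{r}_\phi}_2)$ (the paper's proof opens with this observation). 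Your second term therefore bounds a quantity that is identically zero---harmless, but it cannot be the source of the $L(f)$ term. Second, and this is the real problem, your claim that correctly classified points contribute only about $\norm{\vec{\delta}_\phi}_2$ on average is false in general. Writing $\mathbf{1}_{\mathrm{cor}}, \mathbf{1}_{\mathrm{mis}}$ for the indicators of correct and incorrect classification, one has $y f(\vec{x}) = \mathbf{1}_{\mathrm{cor}}\abs{f(\vec{x})} - \mathbf{1}_{\mathrm{mis}}\abs{f(\vec{x})}$, hence
\[
\Exp_{\Dist}\bigl[\mathbf{1}_{\mathrm{cor}}\abs{f(\vec{x})}\bigr] = \Exp_{\Dist}[y f(\vec{x})] + \Exp_{\Dist}\bigl[\mathbf{1}_{\mathrm{mis}}\abs{f(\vec{x})}\bigr] \;.
\]
The ``telescoping to the mean separation'' you propose controls only the first term (for balanced classes $\Exp_{\Dist}[y f(\vec{x})] = \vec{w}^\top\vec{\delta}_\phi$, since $\Exp[y]=0$ kills the bias). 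The second term does not vanish: misclassified points can be confidently wrong, and it can be as large as $L(f)\cdot\max_{\vec{x}\in\inpspace}\abs{f(\vec{x})} \leq 2\, L(f)\,\norm{\vec{w}}_2 \max_{\vec{x}\in\inpspace}\norm{\phi(\vec{x})}_2$. So the average margin of the \emph{correctly} classified points exceeds $\vec{w}^\top\vec{\delta}_\phi$ by precisely the quantity that, after dividing by $\norm{\vec{w}}_2\Delta$, produces the $L(f)$ term of the lemma.

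In other words, the correct accounting (and the paper's, which invokes the analysis of Fawzi et al.) is the mirror image of yours: misclassified points cost nothing, and the entire bound comes from the correctly classified ones, via $\Exp_{\Dist}[\abs{f(\vec{x})}] \leq \vec{w}^\top\vec{\delta}_\phi + 4\, L(f)\,\norm{\vec{w}}_2 \max_{\vec{x}\in\inpspace}\norm{\phi(\vec{x})}_2$ followed by division by $\norm{\vec{w}}_2\Delta = \vec{w}^\top\vec{\delta}_\phi/\norm{\vec{\delta}_\phi}_2$. Your two claimed bounds happen to sum to the right-hand side of the lemma, but the proof does not stand: the $\norm{\vec{\delta}_\phi}_2$ bound for correct points is false whenever $\Exp_{\Dist}[\mathbf{1}_{\mathrm{mis}}\abs{f(\vec{x})}]>0$, and your plan for making it rigorous (the boundary sitting between the projected class means because $L(f)<\frac12$) cannot repair it---the needed correction is exactly the term you spent on the zero-cost misclassified points. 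The fix is small: keep your reduction, note $\vec{r}_\phi = \vec{0}$ on misclassified points, and use the identity above together with your bias bound $\abs{f(\vec{x})}\leq 2\norm{\vec{w}}_2\max_{\vec{x}\in\inpspace}\norm{\phi(\vec{x})}_2$ to handle the correct points.
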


\begin{proof}
Recall that $\vec{\delta}_\phi = \frac12 \cdot (\meanfp - \meanfm)$. 
If $\vec{x}$ is misclassified by $f$, then $\norm{\vec{r}_\phi}_2 = 0$. 
Otherwise, it is easy to see that 
\[
\norm{\vec{r}_\phi}_2 = 
\displaystyle\frac{\abs{f(\vec{x})}}{\vec{w}^\top \vec{\delta}_\phi} 
\cdot \norm{\vec{\delta}_\phi}_2 \;.
\] 

A simple adaption of the analysis of Fawzi et 
al.~\cite{fawzi2015analysis} (for the case of linear models) yields
\[
\Exp_{\Dist}[\abs{f(\vec{x})}] \leq \vec{w}^\top \vec{\delta}_\phi + 4\cdot L(f)\cdot \norm{\vec{w}}_2 \cdot \max_{\vec{x} \in \inpspace} \norm{\phi(\vec{x})}_2\;.
\]

Combining these results, we obtain
\begin{align*}
\Exp_{\Dist}(\norm{\vec{r}_\phi}_2) 
&\leq \norm{\vec{\delta}_\phi}_2 + 4 \cdot L(f) \cdot \max_{\vec{x} \in \inpspace} \norm{\phi(\vec{x})}_2 \cdot \frac{\norm{\vec{w}}_2\cdot \norm{\vec{\delta}_\phi}_2}{\vec{w}^\top \vec{\delta}_\phi} \\
& = \norm{\vec{\delta}_\phi}_2 + \frac{4 \cdot L(f) \cdot \max_{\vec{x} \in \inpspace} \norm{\phi(\vec{x})}_2}{\Delta}
\end{align*}
\end{proof}

Lemma~\ref{lemma:perturb} is an extension of the result of Fawzi et 
al.~\cite{fawzi2015analysis}, who show that if the difference in 
class means is small, any linear model with low risk can be fooled 
by small perturbations along the direction of $\vec{w}$. Because we are 
interested in the simple ``model-agnostic'' perturbation that 
directly follows the difference of means, the magnitude of the 
perturbation further depends on the alignment $\Delta$ between $\vec{w}$ 
and $\vec{\delta}_\phi$, which is positive according to 
the assumption in Equation~\eqref{eq:assumption}.

We now consider transferability of the perturbation $\vec{r}$ in input 
space. Recalling~\eqref{eq:alpha}, we can write
\begin{align*}
\vec{w}^\top \phi(\vec{x}+\vec{r}) &= \vec{w}^\top \left( \phi(\vec{x}) + \alpha \cdot \vec{\delta}_\phi + \beta \cdot \vec{\delta}_\phi^\bot \right)
\end{align*}

Assume $\Delta \gg L(f)$ and suppose first that $\beta = 0$. 
Then, according to Lemma~\ref{lemma:perturb}, on average, the minimal 
value of $(-y \cdot \alpha)$ to get $f$ to misclassify is $\approx 1$. 
However, the contribution of $\beta\cdot \vec{\delta}_\phi^\bot$ has 
to be taken into account. 
Note that $\hat{\vec{w}}^\top \hat{\vec{\delta}}_\phi^\bot \leq 1-\Delta$. 
Thus, we have
\[
\abs{\beta \cdot \vec{w}^\top \vec{\delta}_\phi^\bot} \leq \abs{\beta} \cdot \norm{\vec{w}}_2 \cdot \norm{\vec{\delta}_\phi} \cdot (1-\Delta) \;.
\]

In the worst case, the absolute magnitude of $\alpha$ should be 
increased by $\frac{1-\Delta}{\Delta} \abs{\beta}$ to ensure that 
any non-adversarial contribution of the orthogonal component of the 
perturbation is canceled out. Thus, in expectation, we need 
$-y\cdot \alpha \geq 1 + \frac{1-\Delta}{\Delta} \abs{\beta}$ to 
have $f$ misclassify with non-zero probability. Note that this bound
is expected to be loose in practice, as the orthogonal component 
$\vec{\delta}_\phi^\bot$ is unlikely to be maximally aligned 
with $\vec{w}$.

\newpage
\section{Appendix: Additional Distance Measurements}
\label{ap:dir-dis}

We provide additional measurements of the similarity in models' 
decision boundaries.
Inter-boundary distances computed on the MNIST models with the 
$\ell_1$ norm are reported in Figure~\ref{subfig:mnist-l1-distances}. 
Results for the $\ell_2$ norm were in 
Figure~\ref{fig:main-distance-results}. 
Analog results for the DREBIN models
are reported in Figure~\ref{subfig:drebin-l1-distances} 
and~\ref{subfig:drebin-l2-distances}.

\begin{figure}[h] 
	\begin{subfigure}[b]{\textwidth}
		\includegraphics[width=\textwidth]{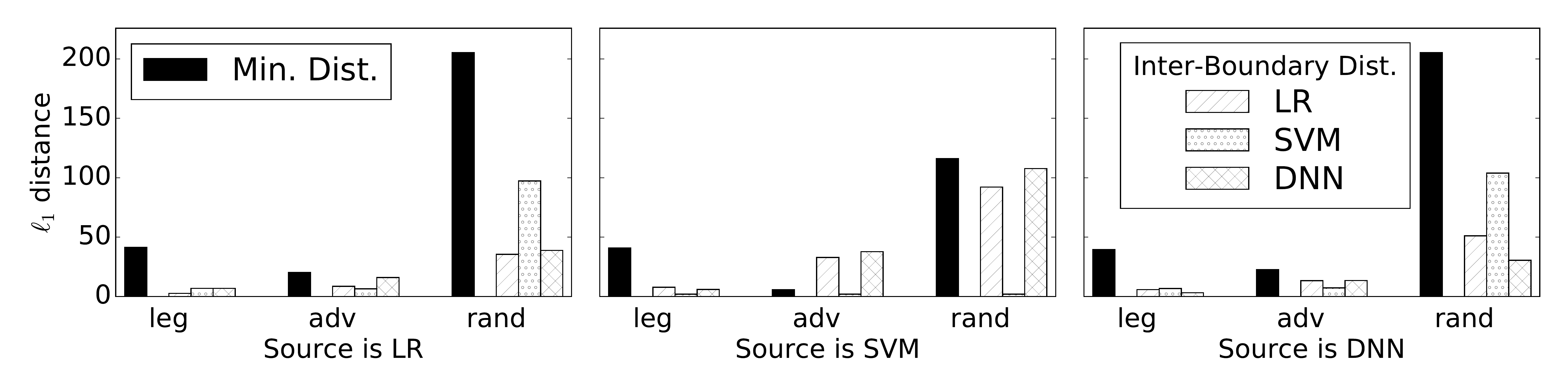}
		\caption{Distances measured using the $\ell_1$ norm for the MNIST models. 
			The adversarial search uses $\varepsilon=80$.} 
		\label{subfig:mnist-l1-distances} 
	\end{subfigure}
	\begin{subfigure}[b]{\textwidth}
		\includegraphics[width=\textwidth]{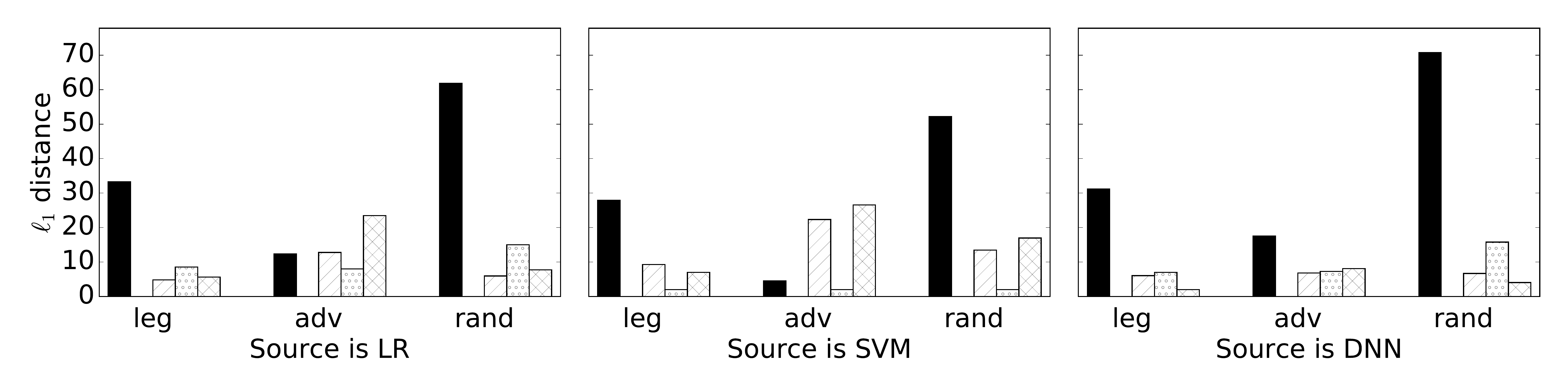}
		\caption{Distances measured using the $\ell_1$ norm for the DREBIN models. 
			The adversarial search uses $\varepsilon=80$.} 
		\label{subfig:drebin-l1-distances} 
	\end{subfigure}
	\begin{subfigure}[b]{\textwidth}
		\includegraphics[width=\textwidth]{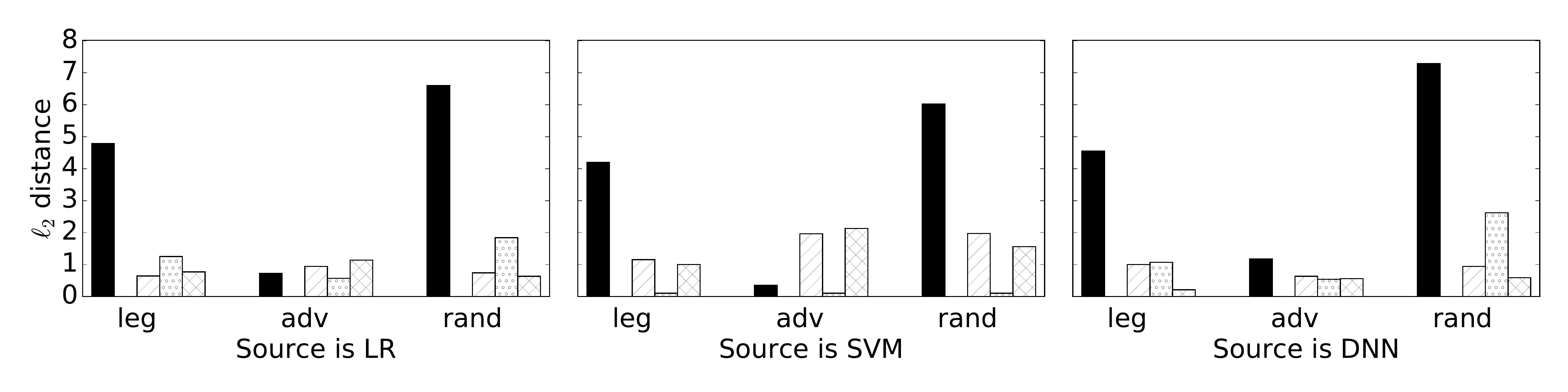}
		\caption{Distances measured using the $\ell_2$ norm for the DREBIN models. 
			The adversarial search uses $\varepsilon=5$.} 
		\label{subfig:drebin-l2-distances} 
	\end{subfigure}
	\caption{Minimum distances and inter-boundary distances
		in three directions between models trained on MNIST and DREBIN. 
		Each plot shows results for one source model, and all three
		classes of target models (one hatched bar per model class).
		Within each plot, bars are grouped 
		by direction (legitimate, adversarial and random). 
		The filled black bar shows the minimum distance to the decision 
		boundary, computed on the source model.
		The adversarial search uses the FGM in either the $\ell_1$ or
		$\ell_2$ norm.}
	\label{fig:learning-approximators}
\end{figure}

\end{document}